\begin{document}

\frontmatter          
\pagestyle{headings}  
\mainmatter              
\title{Fast Estimation of Relative Transformation Based on Fusion of Odometry and UWB Ranging Data}
\titlerunning{Fast Estimation of Relative Transformation}  
%
\author{Yuan Fu\inst{1,2} \and Zheng Zhang\inst{3}\and Guangyang Zeng\inst{4} \and Chun Liu\inst{1} \and Junfeng Wu\inst{4} \and Xiaoqiang Ren\inst{1,5}}
\authorrunning{Yuan Fu et al.} 
\institute{School of Mechatronic Engineering and Automation, Shanghai University, China, \email{xqren@shu.edu.cn}
\and
Shanghai Artificial Intelligence Laboratory, China
\and 
Institute of Artificial Intelligence, Shanghai University, China
\and
School of Data Science, the Chinese University of HongKong, Shenzhen, China
\and
Key Laboratory of Marine Intelligent Unmanned Swarm Technology and System, Ministry of Education}
\maketitle              
\begin{abstract}
In this paper, we investigate the problem of estimating the 4-DOF (three-dimensional position and orientation) robot-robot relative frame transformation using odometers and distance measurements between robots. Firstly, we apply a two-step estimation method based on maximum likelihood estimation. Specifically, a good initial value is obtained through unconstrained least squares and projection, followed by a more accurate estimate achieved through one-step Gauss-Newton iteration. Additionally, the optimal installation positions of Ultra-Wideband (UWB) are provided, and the minimum operating time under different quantities of UWB devices is determined. Simulation demonstrates that the two-step approach offers faster computation with guaranteed accuracy  while effectively addressing the relative transformation estimation problem within limited space constraints. Furthermore, this method can be applied to real-time relative transformation estimation when a specific number of UWB devices are installed.
\keywords{Optimization, relative localization, Ultra-Wideband (UWB)}
\end{abstract}
%
\section{INTRODUCTION}\label{section_introduction}

In recent years, there has been a growing interest in multi-robot applications such as collaborative mapping of the environment~\cite{qin_autonomous_2019,potena_agricolmap_2019}, target tracking~\cite{yu_cooperative_2015}, and search and rescue~\cite{queralta_collaborative_2020}. To ensure the success of these operations, it is crucial to establish a common frame of reference for representing the attitude and sensor measurements of individual robots. Therefore, real-time estimation of relative orientation or initial relative frame transformation among robots, referred to as relative transformation estimation (RTE), becomes indispensable.

When prior knowledge of the environment is available, such as a pre-existing map or layout obtained from GPS and compass data, or fixed Ultra-Wideband (UWB) anchors, the robot's own pose can be easily determined along with the calculation of relative pose. However, in scenarios where environmental information is unknown, such as emergency response operations in compromised infrastructures, it becomes crucial to comprehend the relative positions of multiple robots for efficient exploration tasks. Consequently, an increasing number of researchers are employing radio signals, optical sensors, inertial measurements, and other techniques to achieve RTE without prior environmental information~\cite{khalajmehrabadi_modern_2017,he_wi-fi_2016,yassin_recent_2017}.

Currently, the prevailing approach involves equipping robots with external sensors such as cameras, LiDARs, radars and UWB sensors to obtain relative measurements of the robot's position, distance and attitude~\cite{zafari_survey_2019}. These approaches assume that adjacent robots already know the shape or size of each other. It is important to consider the varying complexity and cost of different sensors~\cite{de_ponte_muller_survey_2017}. When using cameras or LiDARs, performance is limited by factors like field of view and detection range of the sensor, environmental conditions including light intensity, rain or fog presence, complexity of the framework including data processing for object detection and tracking over time, and robustness against false detection~\cite{yuan_survey_2021}. Conversely, UWB sensors have advantages such as omni-directional measurement capability with centimeter-level accuracy at long distances (up to hundreds of meters), immunity to lighting conditions variations, ease in modeling and processing ranging data, and unique identification assignment for each UWB tag without ambiguity. Moreover, UWB provides a smaller and lighter sensor alternative for mobile robotics compared to LiDAR~\cite{xianjia_applications_2021}. However, for RTE tasks, a single ranging measurement is insufficient due to the lack of information about the surrounding environment, such as position. The majority of research papers therefore combine distance measurement with local self-measurement, and this paper primarily focuses on utilizing odometry for providing local location information.

The RTE problem based on odometry and UWB measurements involves optimizing a specific objective function, often focusing on the non-convex properties of the distance measurement model. Typically, local solvers such as the Gauss-Newton algorithm are employed for this problem~\cite{Kassakian:EECS-2006-64}. However, a significant drawback of these local solvers is their reliance on an appropriate initialization point~\cite{yan_review_2013,trawny_global_2010}. In the absence of an appropriate initialization point, the local solver may converge to a suboptimal solution. Convex relaxations using semidefinite programming (SDP) were introduced by~\cite{trawny_global_2010,goudar_optimal_2023,nguyen_relative_2023} to address this issue in the initial non-convex problem. These proposed methodologies specifically target a specific dimension of the state vector, which exhibits an inverse relationship with both computational efficiency and solution accuracy. An algebraic approach proposed in~\cite{trawny_interrobot_2010} aims to enhance calculation speed by substituting higher-order unknowns with first-order ones through introducing new variables. The resulting transformed equations are then solved using the same linear method as described in~\cite{molina_martel_unique_2019}. However, it is important to note that the algebraic method suffers from extensive substitution and approximate decomposition. In addition, both the algebraic approach and the linear method fail to consider measurement noise adequately, leading to potential compromises in accuracy. A nonlinear approach was introduced in~\cite{jiang_3-d_2020}, which prioritizes accuracy by identifying solutions with zero gradient and selecting the one with minimal variance, albeit at the expense of computational speed.

There are scenarios in which robots encounter limited mobility due to environmental or mission constraint~\cite{guo_ultra-wideband_2017}. In addition, some specific tasks may require quick solutions and allow only small movements. However, the local solvers, SDP and nonlinear approach methods require a certain amount of computation time, while the algebraic approach and linear method lack accuracy as mentioned above. Therefore, achieving high-precision estimates with high speed in limited space poses a significant challenge that requires careful attention. In this paper, our objective is to find an algorithm that minimizes computing time and reduces spatial dependence while effectively restraining estimation errors. We offer a method for determining the relative orientation of a mobile robot to its neighbors in a 2D plane and its position relative to its neighbors in a 3D plane by measuring relative distances and communicating displacement information. And we design the trajectory of robots and placement of UWB sensors, enabling flexible selection of varying quantities of UWB sensors to meet diverse application requirements, such as appropriately increasing their number when operating within limited motion spaces. We focus on addressing these challenges within four degrees of freedom  due to practical considerations, as estimating only the yaw angle is sufficient for most ground and aerial robots that possess built-in roll and pitch stabilization capabilities~\cite{goudar_optimal_2023}. Therefore, it is reasonable to focus solely on four degrees of freedom when considering the RTE problem. In addition, a shorter state vector facilitates easier handling of the RTE problem.

The main contributions of this paper are summarized as follows:
\begin{enumerate}
		\item  [$(i).$] We offer a two-step estimation method with a closed-form solution based on maximum likelihood estimation, which combines UWB distance measurements and odometer. Given the high sampling rates observed in real UWB systems, our work holds significant potential for various applications.
		\item  [$(ii).$] The path design rules and configuration positions of the UWB are thoroughly analyzed under varying numbers of UWBs, thereby ensuring a minimum operating time for robots.
\end{enumerate}

The remaining sections of this paper are organized as follows. Section~\ref{sec_system} presents the preliminaries and problem formulation, followed by the two-step estimation method in Section~\ref{sec_method}. Subsequently, the configurations of UWBs and the path design of the robots are discussed in Section~\ref{sec_path}. Simulation results in Section~\ref{sec_simulation} are then presented to verify the performance of our method compared to SDP and analyze the advantages and disadvantages of the proposed approaches. Finally, conclusions and future work are provided in Section~\ref{sec_conclusion}.




\emph{Notions}: All vectors are denoted by bold, lower case letters; matrices are represented by bold, upper case letters. The column vector obtained by stacking the columns of matrix $\bf A$ is denoted as $\bf{vec}(\bf A)$. The Kronecker product is symbolized by $\otimes$. The ceiling function, denoted as $\lceil x\rceil$, assigns the smallest integer that is greater than or equal to a given real number $x$. It is denoted as $X_n = o_p(a_n)$ that the sequence of $X_n/a_n$ converges to zero in probability. Rotations are expressed using elements from the special orthogonal group $SO(3) = \{{\bf R} \in {\mathbb R} ^{3\times 3}\mid{\bf R}^{\top}{\bf R}  =  {\bf I} _3, {\bf{det}}( {\bf R} ) = 1\}$, where ${\bf I}_d$ represents the identity matrix of dimension $d$, and ${\bf{det}}(\cdot)$ denotes the determinant operation. For a position vector ${\bf p} \in \mathbb R^3$, denote its elements as ${\bf p}=[P_{x},P_{y},P_{z}]^{\top}$.




\section{System Overview}\label{sec_system}
\subsection{Problem Statement}
    As illustrated in Fig.~\ref{fig_system}, consider host robot $\mathcal{R}_1$ and target robot $\mathcal{R}_2$, both moving in $3D$ and obtaining range measurements between each other. The host robot is equipped with $J_1$ UWB anchors, while the target robot carries $J_2$ UWB tags (the numbers of UWBs, $J_1, J_2$,  will be discussed in section~\ref{sec_path}). The time stamp of the odometry is denoted as $t_k$, where $k\in \{1,2,\cdots,K\}$. And $t_K$, where $K=M_2M_1$, denotes the minimum operating time, with $M_1=\lceil 4/{J_1}\rceil$ and $M_2=\lceil 3/{J_2}\rceil$ being design parameters discussed in Section~\ref{sec_path}.    
    \begin{figure}[thpb]
      \centering
      \includegraphics[scale=0.25]{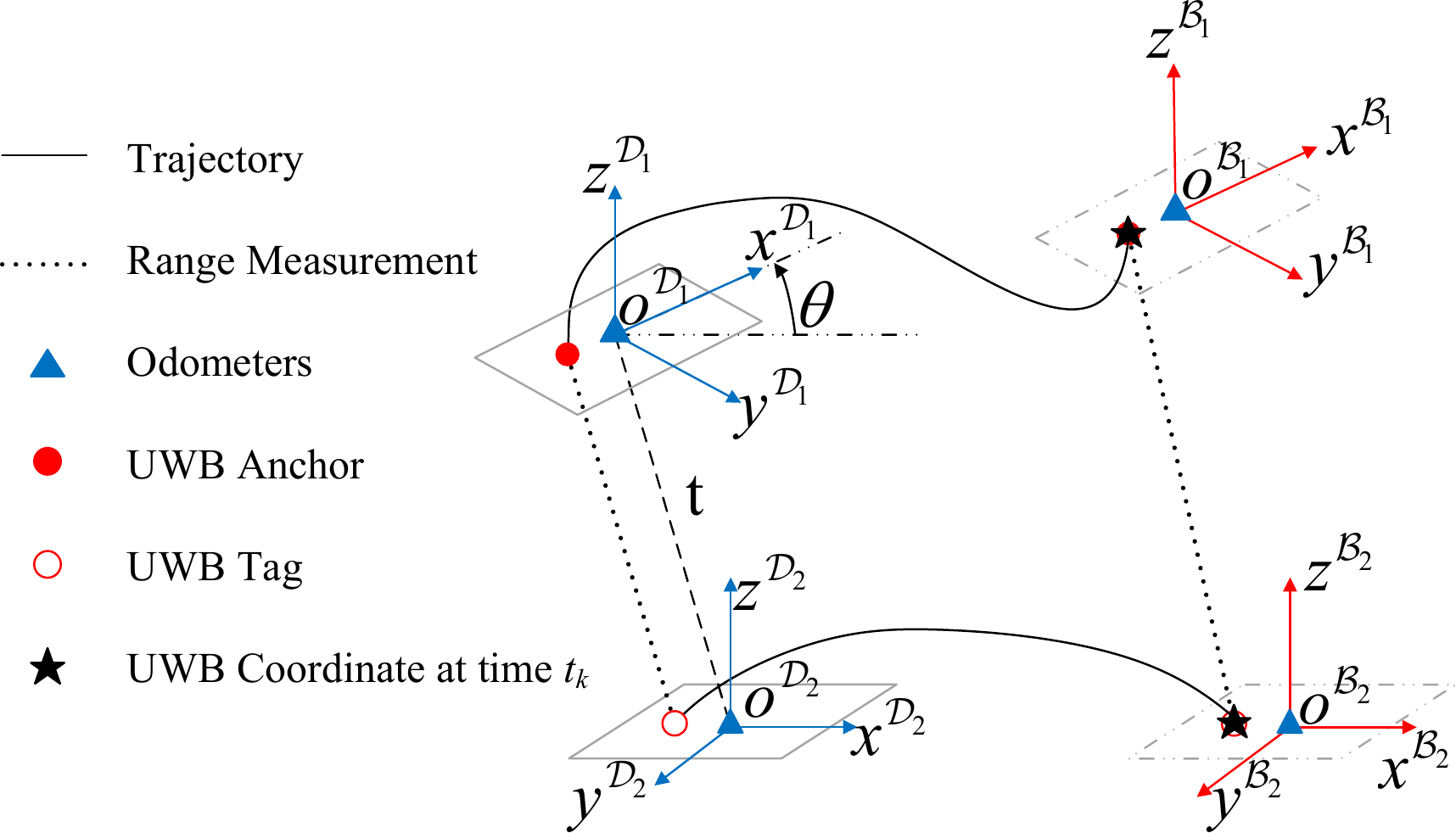}
      \caption{Overview of the proposed system.}
      \label{fig_system}
   \end{figure}
   
    Without loss of generality, we assume that the local odometer frames $\mathcal{D}_1$ and $\mathcal{D}_2$ at $t_1=0$ serve as the  global frames of each robot. 
    It is assumed that the $z$ axis of frame ${\mathcal D}_n, n \in \{1,2\}$ is always parallel to the direction of gravity. And the UWB localization of robot ${\mathcal R}_n$ is established within its local body frame, denoted as ${\mathcal B}_n$. Further, it is assumed that each robot has the capability to determine its own position relative to a global frame of reference by utilizing odometry. Therefore, we assume that the coordinates of the $j_n$-th UWB on the robot, ${\bf p}_{\mathcal{B}_n}^{j_n}$, is known. Accordingly, at time $t_k$, the $j_n$-th UWB position in the local odometer frame $\mathcal{D}_n$ is 
\begin{equation}
    {\bf p}_{n,\mathcal{D}_n}^{j_n,k}=_{{\mathcal B}_n}^{{\mathcal D}_n}{\bf p}_k+_{{\mathcal B}_n}^{{\mathcal D}_n}{\bf R}_k{\bf p}_{\mathcal{B}_n}^{j_n},
\end{equation}
where $j_n\in \{1,2,\cdots,J_n\}$, $_{{\mathcal B}_n}^{{\mathcal D}_n}{\bf R}$ and $_{{\mathcal B}_n}^{{\mathcal D}_n}{\bf p}$ are rotation matrix and position vector from frame ${\mathcal B}_n$ to ${\mathcal D}_n$ obtained by odometry, respectively.  

The objective is to find the 4 degree-of-freedom transformation between frames $\mathcal{D}_1$ and $\mathcal{D}_2$, i.e., the translation ${\bf t} \in {\mathbb R} ^{3}$ and rotation ${\bf R}\in SO(3)$, by using UWB and odometer data, without considering the issue of odometer drift in this paper. In other words, we assume that the provided odometer data is precise. Let $\theta$ denote the relative yaw angle between ${\mathcal D}_1$ and ${\mathcal D}_2$ [see Fig.~\ref{fig_system}]. Then $\bf R$ can be calculated as the basic 3-D rotation matrix around the $z$-axis by an angle $\theta\in[0,2\pi)$ as
\begin{equation}
	{{\bf R}}({\theta})\triangleq \begin{bmatrix}
        \cos{{\theta}} & -\sin{{\theta}}~&0\\
	\sin{{\theta}} & \cos{{\theta}}~&0\\
        0&~0~&1	\end{bmatrix}= \begin{bmatrix}
        \tilde {\bf R}({\theta})&0\\
	0& 1	\end{bmatrix}.
 \end{equation} 
 with $\tilde {\bf R}=\begin{bmatrix}
        \cos{{\theta}} & -\sin{{\theta}}\\
	\sin{{\theta}} & \cos{{\theta}}	\end{bmatrix}$ for simplicity.
   
\subsection{Range Measurement Model}
The anchor positions and tag positions are obtained from the respective robot odometers, and the range measurements $d_{i}^{j_1 j_2,k}$ are obtained from UWB configured on the robots. We have range measurement model as follows:
\begin{equation}\label{eqn_range_difference_measurements_model}
	\begin{gathered}
	d_{i}^{j_1 j_2,k}=\left\|{\bf p}_{1,\mathcal{D}_1}^{j_1,k}-{\bf p}_{2,\mathcal{D}_1}^{j_2,k}\right\|+r_{i}^{j_1 j_2,k}=\left\|{\bf p}_{1,\mathcal{D}_1}^{j_1,k}-{{\bf R}} {\bf p}_{2,\mathcal{D}_2}^{j_2,k}-{\bf t}\right\|+r_{i}^{j_1 j_2,k} ,
	\end{gathered}
\end{equation}
where $i\in \{1,2,\cdots,N\}$, $r_{i}^{j_1 j_2,k}$ are i.i.d. Gaussian white noise with zero mean and finite and known variance $\sigma_{j_1j_2}^2>0$ and ${\bf p}_{2,\mathcal{D}_1}^{j_2,k}$ is the UWB position in the robot $\mathcal{R}_1$'s odometer frame $\mathcal{D}_1$. The UWB anchors receive signals from all UWB tags and relying on the high transmission rates of UWB technology~\cite{9255175}, each pair of anchor and tag can generate $N$ distance measurements corresponding to odometer values at time $t_k$.

The RTE problem is formulated as follows: given ${\bf p}_{n,\mathcal{D}_n}^{j_n,k}$ and the distance measurements $d_{i}^{j_1 j_2,k}$, estimate the pose of $\mathcal{R}_2$ in the odometer frame $\mathcal{D}_1$, i. e., the translation ${\bf t} $ and rotation ${\bf R}$. 

\section{Proposed Approach}\label{sec_method}
For the ease of notation, let $d_{i}^{j_1 j_2,k}$ be denoted as $d_{i}^{l_1,l_2}$, ${\bf p}_n^{j_n,k}$ as ${\bf p}_n^{l_n}$, $r_{i}^{j_1 j_2,k}$ as $r_{i}^{l_1, l_2}$, ${\bf p}_{1,\mathcal{D}_1}^{j_1,k}$ as ${\bf p}_1^{j_1,k}$ and ${\bf p}_{2,\mathcal{D}_2}^{j_2,k}$ as ${\bf p}_2^{j_2,k}$. Herein, we define $l_n=\{x|x=J_n(k-1)+j_n,~k=1,2,\cdots,M_n\}$. The distance measurement $d_{i}^{l_1,l_2}$ exists for each pair of $l_1, l_2$ based on our meticulously designed path of robots in Section~\ref{sec_path}.
\subsection{Maximum Likelihood Formulation} 
With $N_1=M_1J_1$ and $N_2=M_2J_2$ for simplicity, the joint estimation of orientation and translation, according to the principle of maximum likelihood estimation (ML), can be regarded as solving the optimization problem as follows:
\begin{equation}\label{eqn_MJ_Problems}
	\begin{aligned}
        \min_{{\bf R},{\bf t}}~ &\sum_{l_2=1}^{N_2}\sum_{i=1}^{N} \sum_{l1=1}^{N_1} \cfrac{\left(d_{i}^{l_1,l_2}-\left\|{\bf p}_1^{l_1}-{{\bf R}} {\bf p}_2^{l_2}-{\bf t}\right\|\right)^{2}}{\sigma_{l_1, l_2}^{2}}\\
        \text{s.t.}~ &{\bf R}\in SO(3), {\bf t}\in \mathbb{R}^3.
	\end{aligned}
\end{equation}
Then we can obtain the solution $\hat{\bf R}_{ML}$, $\hat{\bf t}_{ML}$ of this optimization problem. Let $n=NN_1N_2$, we have $n$ measurements, in total. 

The non-linearity and non-convexity of the optimization problem~\eqref{eqn_MJ_Problems} make it difficult to solve. Therefore, we will linearize the model next, which can then be solved using a linear least-squares estimator. The solution obtained from this estimator can subsequently be used as the initial value in the second step.
\subsection{First step} 
The non-linearity of the model in~\eqref{eqn_range_difference_measurements_model} necessitates linearization, which is achieved by squaring it as follows: 
\begin{equation}\label{eqn_Squared_model}
	\begin{aligned}
		{d_{i}^{l_1, l_2}}^2=&(\left\|{\bf p}_1^{l_1}-{{\bf R}} {\bf p}_2^{l_2}-{\bf t}\right\|+r_{i}^{l_1, l_2})^2 \\
        =&\left\|{\bf p}_1^{l_1}\right\|^{2}-2 {{\bf p}_1^{l_1}}^{\top}\left({{\bf R}} {\bf p}_2^{l_2}+{\bf t}\right)+\left\|{{\bf R}} {\bf p}_2^{l_2}+{\bf t}\right\|^{2}\\
	&+{r_{i}^{l_1, l_2}}^{2}+2\left\|{\bf p}_1^{l_1}-{{\bf R}} {\bf p}_2^{l_2}-{\bf t}\right\| {r_{i}^{l_1, l_2}},\\
	\end{aligned}
\end{equation}
where the noise term ${r_{i}^{l_1, l_2}}^{2}+2\left\|{\bf p}_1^{l_1}-{{\bf R}} {\bf p}_2^{l_2}-{\bf t}\right\| r_{i}^{l_1, l_2}$ has a non-negative known mean $\sigma_{l_1, l_2}^2$. Subtracting $\sigma_{l_1, l_2}^2$ from both sides of equation~\eqref{eqn_Squared_model}, we have
\begin{equation}\label{eqn_Squared_model_noise}
	\begin{aligned}
        {d_{i}^{l_1, l_2}}^2-\sigma_{l_1, l_2}^2=&\left\|{\bf p}_1^{l_1}\right\|^{2}-2 {{\bf p}_1^{l_1}}^{\top}\left({{\bf R}} {\bf p}_2^{l_2}+{\bf t}\right)\\
	&+\left\|{{\bf R}} {\bf p}_2^{l_2}+{\bf t}\right\|^{2}+e_{i}^{l_1, l_2},\\
	\end{aligned}
\end{equation}
where $e_{i}^{l_1, l_2}=2\left\|{\bf p}_1^{l_1}-{{\bf R}} {\bf p}_2  ^{l_2}-{\bf t}\right\| r_{i}^{l_1, l_2}+({r_{i}^{l_1, l_2}}^2-\sigma_{l_1, l_2}^2)$ is the new zero-mean noise term. Stacking \eqref{eqn_Squared_model_noise} for all the $M_1J_1$ anchors and subtracting the knowns in~\eqref{eqn_Squared_model_noise} from the measurements, we obtain:
\begin{equation}\label{eqn_with_NL1}
	\begin{gathered}
        {\bf d}  ^{l_2}=\left\|{{\bf R}} {\bf p}_2^{l_2}+{\bf t}\right\|^{2} {\bf 1}_{NM_1 J_1} -2 {\bf p}_{1}^{\top}\left({{\bf R}} {\bf p}_2^{l_2}+{\bf t}\right)+{\bf e}^{l_2},\\
	\end{gathered}
\end{equation}
where
\begin{align*}
        {\bf d}  ^{l_2}&=\begin{bmatrix}
	  {d^{1, l_2}}^2-{\sigma_{1, l_2}}^2 \\
        \vdots \\
        {d^{{N_1}, l_2}}^2-{\sigma_{{N_1}, l_2}}^2 \\
        \end{bmatrix}-{\bf p}_{{\bf n}1}^{\top},~{\bf e}^{l_2}=\begin{bmatrix}
	  e^{1, l_2} \\
        \vdots \\
        e^{{N_1}, l_2} \\
        \end{bmatrix}\\
        {\bf p}_{{\bf n}1}&=\left[\left\|   {\bf p}_1 ^{1}\right\|^{2},\cdots,\left\|  {\bf p}_1^{N_1}\right\|^2\right]\otimes{1_N}^{\top}\in \mathbb{R}^{1 \times NN_1},\\
        {\bf p}_{1}&=\left[ {\bf p}_{1}^{1},\cdots,  {\bf p}_1^{N_1}\right]\otimes{1_N}^{\top}\in \mathbb{R}^{3\times N N_1}.
\end{align*}
To eliminate the second-order terms $\left\|{{\bf R}} {\bf p}_2^{l_2}+{\bf t}\right\|^{2} {\bf 1}_{NN_1}$ in the least-squares problem~\eqref{eqn_with_NL1}, we introduce the projection matrix ${\bf P}={\bf I}_{N N_1}-\cfrac{{\bf 1}_{N N_1}{\bf 1}_{N N_1}^{\top}}{N N_1}.$ Multiply ${\bf P}$ on each side of the equation~\eqref{eqn_with_NL1}, all terms with ${\bf 1}_{NN_1}$ equals to $\bf 0$ and the least square problem become:
\begin{equation}\label{eqn_SLS}
	\begin{aligned}
        \min_{{\bf R},{\bf t}}~ &\sum_{l_2=1}^{N_2} \left\|{ {\bf \bar d}  ^{l_2}+2\bar {\bf p}_1^{\top}\left({{\bf R}} {\bf p}_2^{l_2}+{\bf t}\right) }\right \|^2_{\Sigma _{{\bf \bar e}^{l_2}}}\\
        \text{s.t.} ~ &{\bf R}\in SO(3), {\bf t}\in \mathbb{R}^3,
	\end{aligned}
\end{equation}
where we define ${\bf P d}  ^{l_2}={\bf \bar d}  ^{l_2},{\bf p}_1{\bf P}=\bar {\bf p}_{1}$ and $ {\bf P}{\bf e}^{l_2}={\bf \bar e}^{l_2}.$ 

The expression $\left({{\bf R}} {\bf p}_2^{l_2}+{\bf t}\right)$ can be rewritten as
\begin{align*}
   {\bf \gamma}_1\left(\tilde {\bf R}{\bf \gamma}_1^{\top}{\bf p}_2^{l_2}+{\bf \gamma}_1^{\top}{\bf t}\right)+
{\bf \gamma}_2\left({\bf \gamma}_2^{\top}{\bf p}_2^{l_2}+
{\bf \gamma}_2^{\top}{\bf t}\right)
={\bf \gamma}_1\tilde {\bf R}{\bf \gamma}_1^{\top}{\bf p}_2^{l_2}+{\bf \gamma}_2{\bf \gamma}_2^{\top}{\bf p}_2^{l_2}+{\bf t}
\end{align*}
where we have introduced the matrices ${\bf \gamma}_1=\left[\begin{matrix}1~0~0\\
0~1~0\end{matrix}\right]^{\top}$ and ${\bf \gamma}_2=[0~0~1]^{\top}$. By utilizing the matrix property $\bf vec(ABC)=C^{\top}\otimes A{\bf vec}(B)$, stacking~\eqref{eqn_SLS} for all the $N_2$ tags and vectorizing yields the following optimization problem:
\begin{equation}\label{eqn_with_vec}
        \begin{aligned}
        & \min_{\tilde {\bf R},{\bf t}}~  \left\|{\bf d}   +2{\bf 1}_{N_2}\otimes\bar {\bf p}_1^{\top}{\bf t}+2{{\bf p}_2  }^{\top} \otimes\bar {\bf p}_1^{\top}{\bf{vec}}({\bf \gamma}_1 \tilde {\bf R}{\bf \gamma}_1^{\top})\right\|^2_{\Sigma _{{\bf \bar e}}} \\
      \text{s.t.} ~ \tilde{{\bf R}}\in SO(2), {\bf t}\in \mathbb{R}^3,
        \end{aligned}
\end{equation}
where ${\bf d}=\left[{\bf \bar d}  ^{1};\ldots;{\bf \bar d}  ^{N_2}\right]+2{\bf p}_2^{\top}\otimes\bar {\bf p}_1^{\top} {\bf {vec}}({\bf \gamma}_2{\bf \gamma}_2^{\top})\in \mathbb{R}^{n\times1}$, ${\bf \bar e}=\begin{bmatrix}
	  {\bf \bar e}^{1\top},\cdots, {\bf \bar e}^{N_2\top}
        \end{bmatrix}^{\top}$, and ${\bf p}_{2}=\left[
        {\bf p}_2^{1}~  \dots~   {\bf p}_2^{N_2}
        \right]\in \mathbb{R}^{3\times N_2}$.
        
We define $\bf x$ equals $\begin{bmatrix}
	x_1  ~x_2\end{bmatrix}^{\top}$, where $x_1$ equals $\sin(\theta)$ and $x_2$ equals $\cos(\theta)$. The unconstrained linear least squares in~\eqref{eqn_with_vec} can then be compactly expressed as 
\begin{equation}\label{eqn_UdLS2}	
\min_{\bf y} \left\|{\bf  d}-{\bf H}{\bf }\right\|^2_{ \Sigma_{\bf \bar e}},
\end{equation}
where ${\bf y} = [{\bf x}^{\top},{\bf t}^{\top}]^{\top}$,
         ${\bf H}=\left[{\bf H}_{1}~{\bf H}_{2}\right]$, $ 
        {\bf H}_{1}=-2 \left({\bf p}_2^{\top}\otimes \bar {\bf p}_{1}^{\top}\right) \mathcal{T}$, $
        {\bf H}_{2}=-2\left({\bf 1}_{N_2} \otimes \bar {\bf p}_1^{\top}\right)$
        and $\mathcal{T}=\left[\begin{array}{ccccccccc}
			0 & 1 & 0 & -1 & 0 & 0 & 0 & 0 & 0 \\
			1 & 0 & 0 & 0  & 1 & 0 & 0 & 0 & 0 
		\end{array}\right]^{\top}$.

From \eqref{eqn_UdLS2}, we get unconstrained least squares estimate
\begin{equation}\label{eqn_Closed_form_solution}
	\begin{gathered}
		{\bf \hat y}=\left[\begin{array}{l}
			\bf \hat x \\
			\bf \hat t \\
		\end{array}\right]=\left({\bf H}^{\top} {\bf H}\right)^{-1} {\bf H}^{\top} {\bf d},
	\end{gathered}
\end{equation}
Since the estimate $\hat{\bf y}$ in \eqref{eqn_Closed_form_solution} lacks constraints, we project $\hat{\tilde{{\bf R}}}$ on to $SO(2)$ to obtain
\begin{equation}
    \pi(\hat{\tilde{{\bf R}}})={\bf{arg}} \min_{{\bf W}\in SO(2)}\|\hat{\tilde{{\bf R}}}-{\bf W}\|^2_F={\bf U}{\text{diag}}([1,\text{det}({\bf U}{\bf V}^{\top})]){\bf V}^{\top},
\end{equation}
where $\pi$ is the matrix projection that maps an arbitrary matrix onto $SO(2)$, $\bf U$ and $ V$ are obtained from the singular value decomposition of $\hat{\tilde{{\bf R}}}=\bf U\Sigma V^{\top}$. Consequently, we get the final solution $\pi(\hat{\tilde{{\bf R}}})={\bf R(\hat{\theta})}$ of the first step, which serves as the initial value for one step of the Gauss-Newton iteration.
\subsection{One Step of the Gauss-Newton Iteration}
We have known ${\bf R(\hat{\theta})}$ and $\bf t$ are $\sqrt{n}$-consistent~\cite{jiang_efficient_2023}, so we can implement one step of Gauss-Newton iteration to refine the estimate.  

The problem~\eqref{eqn_MJ_Problems} can be rewritten as an unconstrained one 
\begin{equation}\label{eqn_MJ_GN}
	\begin{aligned}
        \min_{{\bf \theta},{\bf t}}~ &\sum_{l_2=1}^{N_2}\sum_{i=1}^{N} \sum_{l1=1}^{N_1} \cfrac{\left(d_{i}^{l_1,l_2}-\left\|{\bf p}_1^{l_1}-{\bf L}_i{\bf{vec}}{{\bf R(\hat{\theta}+\theta)}} -{\bf t}\right\|\right)^{2}}{\sigma_{l_1, l_2}^{2}},
	\end{aligned}
\end{equation}
where ${\bf L}_i=({\bf p}_2^{l_2}\otimes {\bf I}_3)^{\top}$.
Jacobian matrix $\bf J$ can be calculated by
\begin{align*}
        {\bf J}^{\top}=\left[\begin{array}{ccc}
			\dfrac{\partial \|f_{1,1,1}^{(0)}\|}{\partial ({\theta},{\bf t})}, & \cdots, & \dfrac{\partial \|f_{N_1,N_2,N}^{(0)}\|}{\partial ({\theta},{\bf t})}\\
		\end{array}\right],
\end{align*}
where
\begin{align*}
		\cfrac{\partial \|f_{{{l}_1},{{l}_2},i}^{(0)}\|}{\partial ({\theta},{\bf t})}&=\begin{bmatrix}
			-\cfrac{1}{\|f_{{{l}_1},{{l}_2},i}^{(0)}\|}
			\Psi^{\top} ({\bf I}_2\otimes {\bf \hat R}^{\top})  {{\bf L}}_i^{\top}\,f_{{{l}_1},{{l}_2},i}^{(0)},~
			-\cfrac{1}{\|f_{{{l}_1},{{l}_2},i}^{(0)}\|} f_{{{l}_1},{{l}_2},i}^{(0)}
		\end{bmatrix}^{\top},\\
		\Psi&=\cfrac{\partial {\rm vec}({R}(0))}{
			\partial {\theta}}=\begin{bmatrix}
			0~1~0~-1~0~0~0~0~1
		\end{bmatrix}^{\top}.
\end{align*}
Thus the one step Gauss-Newton equation with $\Sigma_{n}$ as covariance matrix for $r_{{{l}_1}{{l}_2}}$ becomes
\begin{equation}\label{eqn_one_step_GN}
	(\hat{{\theta}},\hat{{\bf t}})_{\rm GN}=
	(0,\hat{\bf t})+
	({\bf J}^{\top}\Sigma_{n}^{-1}{\bf J})^{-1}{\bf J}^{\top}\Sigma_{n}^{-1}\big({\bf d}-{\bf f}(0,\hat{{\bf t}})\big).
\end{equation}
The estimates can subsequently be obtained as follows:
\begin{equation}\label{eqn_GN}
    \hat{\bf R}_{GN}={\bf R}(\hat{\theta}+\hat{\theta}_{GN}),~\hat{\bf t}_{GN}=\hat{\bf t}_{GN}.
\end{equation}
\begin{theorem}
The estimate obtained through one step of Gauss-Newton iteration on the ML problem~\eqref{eqn_MJ_Problems} converges in probability to the ML estimates as the number of measurements $n$ increases. Specifically, we have 
$$\hat{\bf R}_{GN}-\hat{\bf R}_{ML}=o_p(1/\sqrt{n}),~\hat{\bf t}_{GN}-\hat{\bf t}_{ML}=o_p(1/\sqrt{n}).$$
\label{the_gny}
\end{theorem}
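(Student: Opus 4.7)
This is an instance of the classical one-step Newton (Le Cam) principle: starting from a $\sqrt{n}$-consistent initial estimator, a single Gauss--Newton step along the true objective produces an estimator asymptotically equivalent to the ML estimator up to $o_p(1/\sqrt{n})$. Accordingly I would first package the ML problem \eqref{eqn_MJ_Problems} as a weighted nonlinear least-squares problem $\min_\phi \|d-h(\phi)\|^2_{\Sigma_n^{-1}}$ with $\phi=(\theta,{\bf t})\in\mathbb R^{4}$, $h(\phi)$ collecting all range functions $\|{\bf p}_1^{l_1}-{\bf R}(\theta){\bf p}_2^{l_2}-{\bf t}\|$, and $\Sigma_n$ the measurement covariance. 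The first step is to record the two ingredients provided by the paper: (a) the projected first-step estimate $\hat\phi=(\hat\theta,\hat{\bf t})$ is $\sqrt n$-consistent by the cited result \cite{jiang_efficient_2023}; and (b) the ML estimate $\phi_{ML}$ is also $\sqrt n$-consistent and satisfies the score equation ${\bf J}(\phi_{ML})^{\top}\Sigma_n^{-1}\bigl(d-h(\phi_{ML})\bigr)=0$.

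Next I would Taylor-expand the score equation around $\hat\phi$ to first order:
\begin{equation*}
{\bf J}(\phi_{ML})^{\top}\Sigma_n^{-1}\bigl(d-h(\phi_{ML})\bigr)
= {\bf J}(\hat\phi)^{\top}\Sigma_n^{-1}\bigl(d-h(\hat\phi)\bigr)
-{\bf A}_n(\hat\phi)\,(\phi_{ML}-\hat\phi)+{\bf r}_n,
\end{equation*}
where ${\bf A}_n(\hat\phi)={\bf J}(\hat\phi)^{\top}\Sigma_n^{-1}{\bf J}(\hat\phi)$ plus curvature terms that are linear in the residual $d-h(\hat\phi)$, and ${\bf r}_n$ is a second-order remainder bounded by a constant times $\|\phi_{ML}-\hat\phi\|^2$. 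Because both $\hat\phi$ and $\phi_{ML}$ are within $O_p(1/\sqrt n)$ of the true parameter, the residual $d-h(\hat\phi)$ is $O_p(\sqrt n)$ on the un-normalized scale so, after scaling by $1/n$, the curvature correction vanishes in probability and $\frac{1}{n}{\bf A}_n(\hat\phi)$ converges to the same Fisher-type matrix as $\frac{1}{n}{\bf J}(\hat\phi)^{\top}\Sigma_n^{-1}{\bf J}(\hat\phi)$; the quadratic remainder satisfies $\frac{1}{n}{\bf r}_n=o_p(1/\sqrt n)$. Substituting the Gauss--Newton update \eqref{eqn_one_step_GN}--\eqref{eqn_GN}, which by construction solves ${\bf J}(\hat\phi)^{\top}\Sigma_n^{-1}{\bf J}(\hat\phi)\,(\hat\phi_{GN}-\hat\phi)={\bf J}(\hat\phi)^{\top}\Sigma_n^{-1}(d-h(\hat\phi))$, and rearranging the two equations gives
\begin{equation*}
\hat\phi_{GN}-\phi_{ML}
=\Bigl(\tfrac{1}{n}{\bf J}(\hat\phi)^{\top}\Sigma_n^{-1}{\bf J}(\hat\phi)\Bigr)^{-1}\!
\Bigl[\tfrac{1}{n}{\bf A}_n(\hat\phi)-\tfrac{1}{n}{\bf J}(\hat\phi)^{\top}\Sigma_n^{-1}{\bf J}(\hat\phi)\Bigr](\phi_{ML}-\hat\phi)+o_p(1/\sqrt n).
\end{equation*}
Since the bracketed difference is $o_p(1)$ (the extra curvature terms scale like the normalized residual, which is $o_p(1)$ at a consistent estimate) and $\phi_{ML}-\hat\phi=O_p(1/\sqrt n)$, the product is $o_p(1/\sqrt n)$, which gives $\hat\theta_{GN}-\hat\theta_{ML}=o_p(1/\sqrt n)$ and $\hat{\bf t}_{GN}-\hat{\bf t}_{ML}=o_p(1/\sqrt n)$. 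The conclusion for $\hat{\bf R}_{GN}-\hat{\bf R}_{ML}$ then follows by a one-line Taylor expansion of ${\bf R}(\cdot)$ in $\theta$, since $\partial{\bf R}/\partial\theta$ is bounded.

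\textbf{Main obstacle.} The routine scaling steps are standard, but the delicate point is justifying that the Gauss--Newton Hessian surrogate ${\bf J}^{\top}\Sigma_n^{-1}{\bf J}$ captures the true Hessian of the ML objective up to negligible terms. This requires controlling the omitted second-derivative contributions $\sum_i(d_i-h_i(\hat\phi))\nabla^2 h_i(\hat\phi)/\sigma_i^2$; each summand is mean-zero up to $O_p(1/\sqrt n)$ bias and the partial derivatives $\nabla^2 h_i$ are uniformly bounded on a neighborhood of the truth (the range functions are smooth away from coincidence, and the designed trajectory keeps the anchors and tags apart). A uniform law of large numbers over a shrinking $O_p(1/\sqrt n)$-neighborhood of $\phi_{ML}$, combined with the $\sqrt n$-consistency of $\hat\phi$, is what lets this curvature term be absorbed into $o_p(1/\sqrt n)$, and this is the step that deserves the most care in a full write-up.
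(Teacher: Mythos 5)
Your proposal is the standard one-step (Le Cam) Gauss--Newton efficiency argument built on the $\sqrt{n}$-consistency of the first-step estimate, which is exactly the route the paper relies on: the paper itself omits the proof and appeals to Theorem 1 of \cite{jiang_efficient_2023}, whose proof proceeds by the same score-equation expansion and curvature-term control you outline. So the proposal is correct in approach and consistent with the paper; the only points needing care in a full write-up are the ones you already flag (consistency of the ML estimate and nonsingularity of the limiting information matrix, which here follow from the full-rank design conditions of Theorem~\ref{the_path} with $n$ growing through repeated measurements $N$).
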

While our paper considers the four degrees of freedom RTE problem as distinct from that discussed in paper~\cite{jiang_efficient_2023}, Theorem~\ref{the_gny} can be proven using a similar approach to Theorem 1 presented in paper~\cite{jiang_efficient_2023}. Thus, the proof is omitted here. 
We summarize the two-step method in Algorithm~\ref{Algorithm_1}.

\begin{algorithm}
	\caption{Two-step method}
	\label{Algorithm_1}
        \begin{algorithmic}[1]
		\renewcommand{\algorithmicrequire}{ \textbf{Input:}}
		\renewcommand{\algorithmicensure}{ \textbf{Output:}} 
		\REQUIRE $d_{i}^{l_1,l_2}$, ${\bf p}_1^{l_1}$, ${\bf p}_2^{l_2}$ and ${\sigma_{l_1, l_2}^{2}}$.
		\ENSURE the estimates of ${\bf R}_o$ and $t_o$.
		\STATE Construct ${\bf d}$ and $\bf H$;
		\STATE Calculate $\hat {\bf y}$ according to~\eqref{eqn_Closed_form_solution};
		\STATE Project $\hat{\tilde {\bf R}}$ onto $SO(2)$;
		\STATE Construct $\bf J^{\top}$ and ${\bf d}-{\bf f}(0,\hat{{\bf t}})$;
		\STATE Calculate $(\hat{{\theta}},\hat{{\bf t}})_{\rm GN}$ according to~\eqref{eqn_one_step_GN};
		\STATE Calculate $\hat{\bf R}_{GN}$ and $\hat{\bf t}_{GN}$ based on the equation~\eqref{eqn_GN}.
	\end{algorithmic}
\end{algorithm}

\section{Path Design and UWB Configuration}\label{sec_path}
Due to the dependence of~\eqref{eqn_Closed_form_solution} on the full rank of matrix $\bf H$, the following theorem is presented. 
\begin{theorem}
The matrix $\bf H$, defined in~\eqref{eqn_Closed_form_solution}, has full column rank, ensuring the existence of the solution $\bf\hat{y}$, when both robots' paths meet the following conditions:

\begin{enumerate}
    \item If $J_1<4$: Robot $\mathcal{R}_1$ generates $N_1$ non-coplanar coordinates through its odometry by moving at time interval $t_{1}-t_{M_1}$. This path is then repeated $M_2$ times resulting in $M_2$ sets of coordinates.

    \item If $J_1=4$: Robot $\mathcal{R}_1$, remaining static in its initial position, install four anchors non-coplanarly.

    \item If $J_2<3$: Robot $\mathcal{R}_2$, with each movement occurring at time interval $t_{iM_1-1}-t_{iM_1},~i\in \{2,\cdots,M_2-1\}$, moves $M_2$ times generating $N_2$ non-coplanar coordinates with the origin $O^{\mathcal{D}_2}$.

    \item If $J_2=3$: Robot $\mathcal{R}_2$, remaining static in its initial position, installs three tags non-coplanarly with the origin $O^{\mathcal{B}_2}$.
\end{enumerate}
where $M_1=\lceil 4/J_1\rceil$ and $M_2=\lceil 3/J_2\rceil$. 
\label{the_path} 
\end{theorem}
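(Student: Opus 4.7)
The plan is to prove that $\ker({\bf H}) = \{{\bf 0}\}$. Writing ${\bf y} = [{\bf x}^{\top}, {\bf t}^{\top}]^{\top}$ and observing that the defining columns of $\mathcal{T}$ give $\mathcal{T}{\bf x} = {\bf vec}(\gamma_1 \tilde{{\bf R}}({\bf x}) \gamma_1^{\top})$ with $\tilde{{\bf R}}({\bf x}) = x_2 {\bf I} + x_1 {\bf J}$ and ${\bf J}$ the planar $90^{\circ}$ rotation, one applies the identity $(A^{\top} \otimes B){\bf vec}(X) = {\bf vec}(BXA)$ to $\mathbf{H}_1 \mathbf{x}$ and repackages $\mathbf{H}_2 \mathbf{t}$ as ${\bf vec}(\bar{{\bf p}}_1^{\top} {\bf t}\, {\bf 1}_{N_2}^{\top})$. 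Then ${\bf H}{\bf y} = {\bf 0}$ collapses to the single matrix equation
\begin{equation*}
\bar{{\bf p}}_1^{\top}\bigl(M({\bf x}){\bf p}_2 + {\bf t}\, {\bf 1}_{N_2}^{\top}\bigr) = {\bf 0}, \qquad M({\bf x}) := \gamma_1 \tilde{{\bf R}}({\bf x}) \gamma_1^{\top}.
\end{equation*}
This algebraic repackaging is the only nonroutine computation; everything afterwards is a two-stage rank cascade.

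For the first stage, I will show that conditions (1) and (2) both imply $\mathrm{rank}(\bar{{\bf p}}_1) = 3$. In each case the $N_1$ anchor coordinates ${\bf p}_1^{l_1}$ are non-coplanar, and $\bar{{\bf p}}_1 = {\bf p}_1 {\bf P}$ is, up to the $\otimes {\bf 1}_N^{\top}$ replication (which does not change the rank), the matrix of mean-centered anchor coordinates; non-coplanarity is precisely the condition that those centered coordinates span $\mathbb{R}^3$. With $\bar{{\bf p}}_1^{\top} \in \mathbb{R}^{NN_1 \times 3}$ of full column rank, applying this column-by-column to the displayed equation yields $M({\bf x}){\bf p}_2 + {\bf t}\,{\bf 1}_{N_2}^{\top} = {\bf 0}$. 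Since $M({\bf x})$ has zero third row, reading this $3 \times N_2$ identity row-by-row gives $t_z = 0$ from the bottom row and $\tilde{{\bf R}}({\bf x}) \tilde{{\bf p}}_2^{l_2} = -{\bf t}_{xy}$ for every $l_2$ from the top two, where $\tilde{{\bf p}}_2^{l_2}$ denotes the $xy$-part of ${\bf p}_2^{l_2}$.

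For the second stage I invoke conditions (3) and (4), which state that the tag positions are non-coplanar with the relevant origin; equivalently, $\mathrm{rank}({\bf p}_2) = 3$. If all the $\tilde{{\bf p}}_2^{l_2}$ were equal, the first two rows of ${\bf p}_2$ would both be multiples of ${\bf 1}_{N_2}^{\top}$, forcing $\mathrm{rank}({\bf p}_2) \le 2$ and contradicting rank $3$. Hence some nonzero difference $\tilde{{\bf p}}_2^{l_2} - \tilde{{\bf p}}_2^{l_2'}$ lies in $\ker \tilde{{\bf R}}({\bf x})$. But $\det \tilde{{\bf R}}({\bf x}) = x_1^2 + x_2^2$, so a nontrivial kernel compels ${\bf x} = {\bf 0}$, whence $\tilde{{\bf R}}({\bf x}) = {\bf 0}$, then ${\bf t}_{xy} = {\bf 0}$, and combined with $t_z = 0$, ${\bf y} = {\bf 0}$.

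The main obstacle I anticipate is not the rank cascade but the opening step: keeping the Kronecker bookkeeping honest so that $\mathbf{H} \mathbf{y} = {\bf 0}$ really collapses to the compact equation above. In particular, one must check that the ${\bf vec}(\gamma_2 \gamma_2^{\top})$ offset absorbed into ${\bf d}$ in~\eqref{eqn_with_vec} does not re-enter the left-hand side, and that the $\otimes {\bf 1}_N^{\top}$ replication built into ${\bf p}_1$ interacts cleanly with the centering projector ${\bf P}$. Once that collapse is in place, the four cases of the theorem reduce to the same pair of rank hypotheses on $\bar{{\bf p}}_1$ and ${\bf p}_2$, and the argument proceeds as sketched without further case analysis.
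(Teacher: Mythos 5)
Your proof is correct, and it takes a genuinely different route from the paper's. You de-vectorize the kernel equation: using $\mathcal{T}{\bf x}={\bf{vec}}(\gamma_1\tilde{\bf R}({\bf x})\gamma_1^{\top})$ and the identity $(A^{\top}\otimes B){\bf{vec}}(X)={\bf{vec}}(BXA)$, you collapse ${\bf H}{\bf y}$ to $-2\,{\bf{vec}}\bigl(\bar{\bf p}_1^{\top}(M({\bf x}){\bf p}_2+{\bf t}{\bf 1}_{N_2}^{\top})\bigr)$ and then run a two-stage rank cascade: full column rank of $\bar{\bf p}_1^{\top}$ (non-coplanar anchor coordinates, unaffected by the $\otimes{\bf 1}_N^{\top}$ replication and the centering ${\bf P}$) removes the left factor, the zero third row of $M({\bf x})$ gives $t_z=0$, and ${\bf{Rank}}({\bf p}_2)=3$ forces two distinct $xy$-projections of the tag positions, so a nontrivial kernel of $\tilde{\bf R}({\bf x})$ with $\det\tilde{\bf R}({\bf x})=x_1^2+x_2^2$ yields ${\bf x}={\bf 0}$, then ${\bf t}={\bf 0}$. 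The paper instead keeps everything vectorized: it factors ${\bf H}$ through the full-column-rank matrix ${\bf p}_2^{\top}\otimes\bar{\bf p}_1^{\top}$ (rank $9$ under the same two non-coplanarity hypotheses) and reduces the claim to the full rank of an explicit constant matrix $[{\bf I}_9~{\bf C}]\begin{bmatrix}\mathcal{T}&0\\0&{\bf I}_3\end{bmatrix}$; to build ${\bf C}$ it writes ${\bf 1}_{N_2}={\bf p}_2^{\top}[\alpha,\beta,\gamma]^{\top}$, i.e.\ it needs ${\bf 1}_{N_2}$ to lie in the row space of ${\bf p}_2$, which is automatic only when $N_2=3$ (e.g.\ $J_2=3$) and not guaranteed for $N_2>3$ (e.g.\ $J_2=2$, $N_2=4$). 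Your kernel argument requires no such representability and so handles all $N_2\ge 3$ uniformly; what the paper's factorization buys in exchange is that its final step is a quick check of a small explicit matrix. Your side remarks are also accurate: the ${\bf{vec}}(\gamma_2\gamma_2^{\top})$ offset is absorbed into ${\bf d}$ and never enters ${\bf H}$, so it is irrelevant to the rank claim.
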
  

\begin{proof}
Four distinct anchor locations are necessary because four spheres can uniquely determine a position.
Rewrite matrix $\bf H$ as
$$
\bf H=-2 \left[\begin{array}{cc}
       {\bf p}_2^{\top}\otimes \bar {\bf p}_1^{\top}&1_{N_2}\otimes \bar {\bf p}_1^{\top}\\
\end{array}\right]\left[\begin{array}{cc}
       \mathcal{T} &0\\
       0           &{\bf I}_3
\end{array}\right].
$$
From the non-coplanarity of the aforementioned conditions, we have ${\bf{Rank}}({\bf p}_2)=3$ and ${\bf{Rank}}(\bar {\bf p}_1)={\bf{Rank}}\left({\bf p}_1-{\bf {avg}}({\bf p}_1){\bf 1}_{N_1}\right)=3$. Consequently, it follows that $\bf{Rank}({\bf p}_2^{\top}\otimes \bar {\bf p}_1^{\top})=9$, and the rank of matrix $\bf H$ can be expressed as 
$$\bf{Rank}(\bf H)=\bf{Rank}\left(\bf H_{1}[{\bf I}_9~ C]\left[\begin{array}{cc}
       \mathcal{T} &0\\
       0           &{\bf I}_3
\end{array}\right]\right).$$
Assume that $\bf A$ is an $m\times n$ matrix, if $\bf B$ is an $l\times m$ matrix of rank $m$, then $\bf{Rank}(BA)={\bf{Rank}}(A)$. It is known that the rank of matrix $\mathbf{H}_1$ is 9, where ${\bf H}_1\in \mathbb{R}^{n\times 9}$.To establish the full column rank of $[\mathbf{I}_9~ \mathbf{C}] \begin {bmatrix}\mathcal{T}&0\\0&\mathbf {I}_3\end {bmatrix}$ would suffice to prove that matrix $\mathbf {H}$ is also full rank.

Since ${\bf p}_2^{\top}$ is full rank and ${\bf 1}_{N_2}$ can be represented as ${\bf p}_2^{\top}[\alpha,\beta,\gamma]^{\top}$, where $[\alpha,\beta,\gamma]^{\top}$ is a non-zero vector, we can deduce that $\bf H_2$ can be expressed as follows:
\begin{equation*}
	\begin{aligned}
 \bf H_2=-2({\bf p}_2^{\top}\left[\begin{array}{c}
       \alpha\\
       \beta\\
       {\bf \gamma}
\end{array}\right])\otimes(\bar {\bf p}_1^{\top}{\bf I}_3)
=-2({\bf p}_2{\top}\otimes \bar {\bf p}_1^{\top})(\left[\begin{array}{c}
       \alpha\\
       \beta\\
       {\bf \gamma}
\end{array}\right]\otimes {\bf I}_2).
	\end{aligned}
\end{equation*}
As a result, we have ${\bf C}=\left[\begin{matrix}
       \alpha~       \beta      ~ {\bf \gamma}
\end{matrix}\right]^{\top}\otimes {\bf I}_2$, and it can be observed that $$
[{\bf I}_9~ C]\left[\begin{array}{cc}
       \mathcal{T} &0\\
       0           &{\bf I}_3
\end{array}\right]=\left[\begin{matrix}
0&1&0&-1&0&0&0&0&0\\
1&0&0&0&1&0&0&0&0\\
\alpha&0&0&\beta&0&0&{\bf \gamma}&0&0\\
0&\alpha&0&0&\beta&0&0&\gamma&0\\
0&0&\alpha&0&0&\beta&0&0&\gamma
\end{matrix}\right]^{\top},$$
is full rank. 
\end{proof}
\begin{remark}
    After setting $J_1=4$ and $J_2=3$, it can be observed that real-time RTE can be achieved even in the absence of an odometer. 
\end{remark}

\section{Simulation}\label{sec_simulation}
In this section, we will demonstrate the efficacy of our approach through simulations. Our simulations are specifically designed to isolate and evaluate various factors that can impact system performance: signal noise, trajectory configuration, system movement speed and computation time. 

Considering that~\cite{nguyen_relative_2023} has already conducted a comprehensive comparison with other state-of-the-art methods, and SDP has demonstrated superior accuracy in challenging environments, we exclusively focus on comparing our proposed method (two-step method) with the SDP method in~\cite{nguyen_relative_2023} by evaluating the root mean square error (RMSE):
$$\text{RMSE}({\bf t}) =\sqrt{\cfrac{1}{L}\sum^L_{l=1}\|\hat{\bf t}-{\bf t}_o\|^2_2},~\text{RMSE}({\bf R}) =\sqrt{\cfrac{1}{L}\sum^L_{l=1}\|\hat{\bf R}-{\bf R}_o\|^2_F},$$
where $L=1000$ is the number of of Monte Carlo experiments. All experiments are run on a laptop with an Intel Core i5-11500 CPU with 16 GB RAM.
\subsection{Simulation Setup}
We provide two cases as examples for simulation:
\begin{enumerate}
\item  [$(i).$] There is $J_1=1$ anchor and $J_2=1$ tag deployed at $[0,0,10]$ in their respective  body frame.
\item  [$(ii).$] There are $J_1=4$ anchors deployed at $[0,0,0]$, $[10,0,0]$, $[0,10,0]$ and $[0,0,10]$ in the body frame. There are $J_2=3$ tags deployed at $[10,0,0]$, $[0,10,0]$ and $[0,0,10]$ in the body frame.
\end{enumerate}
Unless explicitly specified otherwise, the relevant parameters will be configured as follows. The noise variance $\sigma_{l_1,l_2}$ is set to the same value $\sigma=1$ for different $l_1,l_2$ combinations. The true values are $t_o=[20,20,20]$ and $\theta_o=60^\circ$. The number of repeated measurements is fixed at $N=100$ for each time $t_k$. It should be noted that in~\cite{nguyen_relative_2023}, the SDP was performed without employing repeated measurements. However, to ensure fairness, we also perform the SDP with $N=100$ repetitions. The maximum distance to the origin $O^{\mathcal{D}_n}$, denoted as the maximum operating distance $R_{max}$, is set as 10. In each simulation, as per the conditions specified in Theorem~\ref{the_path}, the robots' trajectories are randomly generated, ensuring that each trajectory has a distance to the origin not exceeding $R_{max}$. 

\subsection{Simulation Results}
$1)$ \emph{Effect of noise levels}:
We consider $\sigma$ values of $[0.01, 0.1, 1, 10, 100]$. To assess the efficacy of our proposed approach and the SDP in estimating initial relative transformation, we perform simulations under various cases with different variances.  

The comparison results of the two methods in case $(i)$ and the two-step method in case $(ii)$ are presented in Fig.~\ref{fig_compare_1}. It can be observed that as the noise variance $\sigma$ increases, the rate of improvement in the two-step method is slower compared to SDP. This suggests that the two-step method is particularly well-suited for scenarios characterized by a high value of $\sigma$. And the RMSE value is smaller in case $(ii)$ of the two-step method in Fig.~\ref{fig_compare_1}. This is because case $(ii)$ does not have the error introduced by the movement, unlike situation $(i)$.   This suggests that, with the same amount of data,  minimizing movement is beneficial for improving estimation accuracy.
\begin{figure}[thpb]
      \centering
      \includegraphics[width=\textwidth, keepaspectratio]{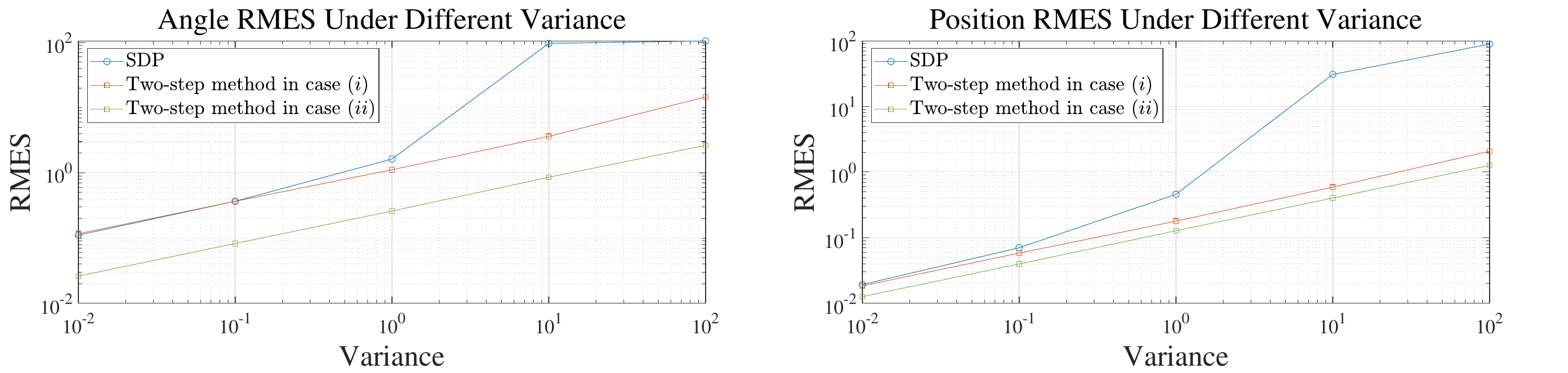}
      \caption{Performance under repeated ranging in case $(i)$.}
      \label{fig_compare_1}
\end{figure}

$2)$ \emph{Effect of trajectory configuration}: Conduct the simulation under case $(i)$, where one of the variables $\|\bf t\|$ and $R_{max}$ is fixed, while observing the influence on the accuracy of another variable. The observation of Fig.~\ref{fig_tra} reveals that our method exhibits comparable performance to SDP in challenging scenarios described in~\cite{nguyen_relative_2023}, which are characterized by a large $\|\bf t\|$ and a small operating radius $R_{max}$.
\begin{figure}[thpb]
      \centering
      \includegraphics[width=\textwidth, keepaspectratio]{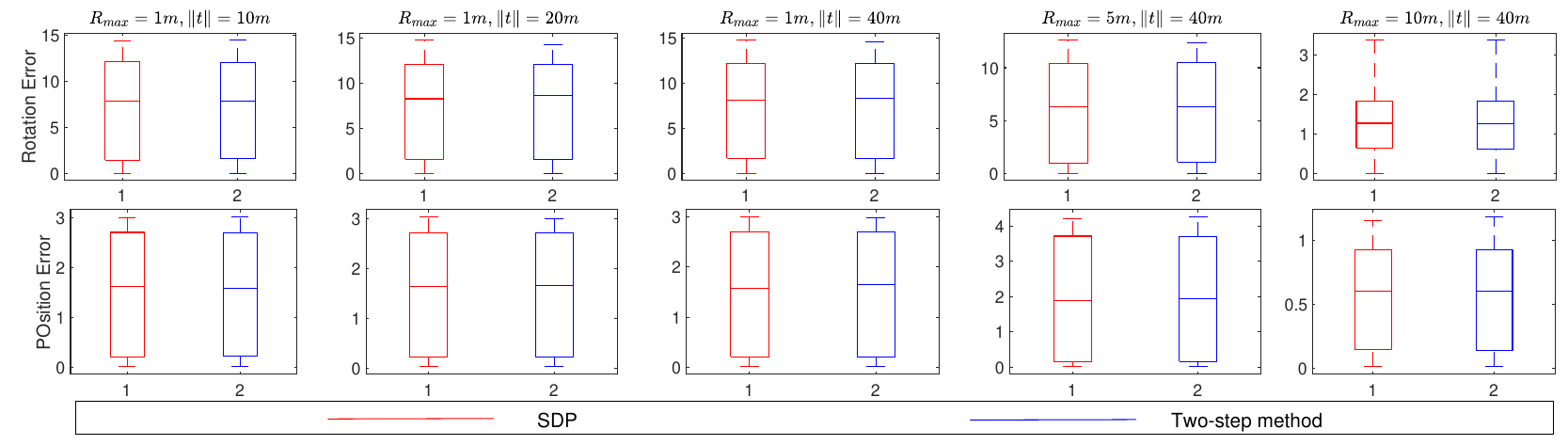}
      \caption{Estimation errors with varying $R_{max}$ and $\|\bf t\|$.}
      \label{fig_tra}
\end{figure}

$3)$ \emph{Effect of Speed levels}:
We have known $$\text{speed}~(v)=\cfrac{\text{length of the path}~(s_k-s_{k-1})}{\text{time}~(t_k-t_{k-1})}.$$
The speed of robot movement was set to $[v,2v,3v,4v,5v]$. The relationship between speed and error under case $(i)$ is illustrated in Fig.~\ref{fig_speed}, where it is observed that higher speeds are positively correlated with increased errors. This is because the displacement caused by the sampling interval results in large UWB errors at high velocities. And the susceptibility of our method to speed influence was observed, indicating its suitability for systems operating at lower speeds.
\begin{figure}[thpb]
      \centering
      \includegraphics[width=\textwidth, keepaspectratio]{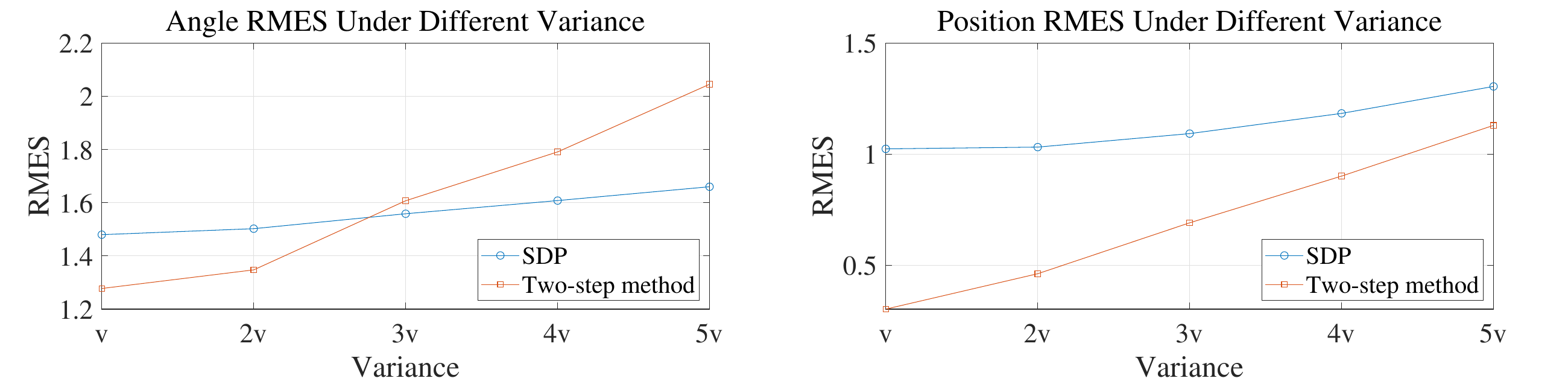}
      \caption{Estimation errors with varying speed.}
      \label{fig_speed}
\end{figure}

$4)$ \emph{Average computation time}:
The simulation conducted under case $(i)$, as shown in TABLE~\ref{table}, demonstrates the average operation time of the two methods. It is evident that the two-step method exhibits superior computational efficiency compared to SDP. This observation aligns with intuition as the closed form solution provided by the two-step method enables a rapid generation of a better initial value, thereby enhancing overall algorithm efficiency.
\begin{table}[h]  
\centering  
\caption{Average computation time of the SDP and two-step method.}  
\label{table}  
\begin{tabular}{c c}  
\hline  
method & Average computation time (s) \\  
\hline  
SDP & 0.3533 \\  
Two-step method &  0.0079\\ 
\hline  
\end{tabular}  
\end{table}  
%
%
%
%
%
\section{CONCLUSIONS AND FUTURE WORKS}\label{sec_conclusion}

We have presented a two-step estimation method for determining 4-DOF robot-robot RTE using odometers and distance measurements between robots. Our approach, based on maximum likelihood estimation, combines unconstrained least squares and projection techniques to obtain good initial estimates. Subsequently, we employ one step of Gauss-Newton iteration to achieve more accurate results efficiently. Through simulations and comparisons with the SDP method, we demonstrated that our proposed approach offers faster computation without compromising accuracy, making it suitable for solving RTE problems in constrained spaces. Furthermore, our method provides design principles for determining optimal installation positions and the minimum operating time based on the number of deployed UWB devices. This not only addresses real-time RTE but also presents a practical solution for scenarios requiring a specific number of UWB installations.

This paper primarily focuses on two-robot scenarios with a carefully designed path. We intend to extend the current algorithm to accommodate multiple agents and plan to explore techniques for reducing path constraints. Additionally, it is worth considering future work aimed at reducing hardware costs by enhancing the algorithm's capability to estimate pitch and roll angles.
\bibliographystyle{IEEEtran}
\bibliography{refs}
\end{document}